\newcommand{\myparagraph}[1]{\textbf{\emph{#1}}.}
\newenvironment{lenumerate}[2][]
{\begin{enumerate}[label=(#2\arabic*),leftmargin=0.2in,itemindent=0.15in,#1]}
{\end{enumerate}}
\setlist*[enumerate,1]{label={\itshape\arabic*)}}
\newcommand{\paragraphswithstop}{%
\let\copyparagraph\paragraph%
\renewcommand\paragraph[1]{\copyparagraph{##1.}}%
}
\newcommand{\real}[1]{\mathbb{R}^{#1}{}}
\DeclarePairedDelimiter{\norm}{\lVert}{\rVert}
\newcommand{\vct}[1]{\mathbf{#1}}
\DeclareMathOperator*{\argmin}{\arg\!\min}
\DeclareMathOperator*{\argmax}{\arg\!\max}
\providecommand{\mI}{\vct{I}}
\providecommand{\mP}{\vct{P}}
\providecommand{\mR}{\vct{R}}
\providecommand{\mT}{\vct{T}}
\providecommand{\cC}{\mathcal{C}}
\providecommand{\cE}{\mathcal{E}}
\providecommand{\cG}{\mathcal{G}}
\providecommand{\cL}{\mathcal{L}}
\providecommand{\cN}{\mathcal{N}}
\providecommand{\cO}{\mathcal{O}}
\providecommand{\cS}{\mathcal{S}}
\providecommand{\cT}{\mathcal{T}}
\providecommand{\cV}{\mathcal{V}}
\newcommand{\newcolorlabel}[2]{%
  \expandafter\newcommand\csname #1\endcsname[1]{%
    \colorbox{#2}{\color{white}\textsf{\textbf{##1}}}}%
}
\newcommand{\newcommenter}[2]{%
  \expandafter\newcommand\csname #1\endcsname[1]{%
    \fcolorbox{#2}{#2}{\color{white}\textsf{\textbf{#1}}}
    {\color{#2}##1}}%
  \expandafter\newcommand\csname at#1\endcsname{%
    \fcolorbox{#2}{#2}{\color{white}\textsf{\textbf{@#1}}}
    {\color{#2}}}%
  \expandafter\newcommand\csname #1hl\endcsname[2]{%
    \colorbox{#2}{\color{white}\textsf{\textbf{#1}}}\sethlcolor{Azure2}\hl{##2}~%
    \expandafter\ifx\csname commentarrow\endcsname\relax$\leftarrow$\else \commentarrow[#2]\fi~%
    {\color{#2}##1}}%
  \expandafter\newcommand\csname #1st\endcsname[2]{%
    \colorbox{#2}{\color{white}\textsf{\textbf{#1}}}\sout{##2}~%
    \expandafter\ifx\csname commentarrow\endcsname\relax$\leftarrow$\else \commentarrow[#2]\fi~%
    {\color{#2}##1}}%
}
\newcommand{\trp}{\mathsf{T}}
\newcommand{\SO}{\mathrm{SO}}
\newcommand{\SE}{\mathrm{SE}}
\newcommand{\SIM}{\mathrm{Sim}}
\newcommand{\So}{\mathfrak{so}}
\pgfplotsset{plot coordinates/math parser=false} 
\newlength\figureheight 
\newlength\figurewidth
\tikzset{
  treenode/.style = {align=center, inner sep=0pt, text centered,
    font=\sffamily},
  arn_n/.style = {treenode, circle, white, font=\sffamily\bfseries, draw=black,
    fill=black, text width=1.5em},
  arn_r/.style = {treenode, circle, red, draw=red, 
    text width=1.5em, very thick},
  arn_x/.style = {treenode, rectangle, draw=black,
    minimum width=0.5em, minimum height=0.5em}
}
\tikzset{cross/.style={cross out, draw=black, minimum size=2*(#1-\pgflinewidth), inner sep=0pt, outer sep=0pt},
cross/.default={1pt}}
\title{\LARGE \bf Statistical Outlier Identification in Multi-robot Visual SLAM\\using Expectation Maximization}
\author{Arman Karimian, Ziqi Yang, Roberto Tron
\thanks{The authors are with the Department of Mechanical Engineering of Boston University, 
Boston, MA. E-mail: {\tt\small \{armandok,zy259,tron\}@bu.edu}.}%
}
\begin{document}

\maketitle
\thispagestyle{empty}
\pagestyle{empty}

\begin{abstract} 



This paper introduces a novel and distributed method for detecting inter-map loop closure outliers in simultaneous localization and mapping (SLAM). The proposed algorithm does not rely on a good initialization and can handle more than two maps at a time. In multi-robot SLAM applications, maps made by different agents have nonidentical spatial frames of reference which makes initialization very difficult in the presence of outliers. This paper presents a probabilistic approach for detecting incorrect orientation measurements prior to pose graph optimization by checking the geometric consistency of rotation measurements. Expectation Maximization is used to fine-tune the model parameters. As ancillary contributions, a new approximate discrete inference procedure is presented which uses evidence on loops in a graph and is based on optimization (Alternate Direction Method of Multipliers). This method yields superior results compared to Belief Propagation and has convergence guarantees. Simulation and experimental results are presented that evaluate the performance of the outlier detection method and the inference algorithm on synthetic and real-world data.

\end{abstract}

\section{Introduction}
Geometric mapping from unknown sensor positions has a long history. This task, which is known as \emph{Structure from Motion}~(SfM) in the computer vision, is traditionally done using only images. The solution pipeline~\cite{hartley2003multiple} for this problem includes three steps. First, estimate relative poses between pairs of images using matched features~\cite{lowe2004distinctive,dong2015domain,bay2008speeded} and robust fitting techniques~\cite{fischler1981random,hartley2012efficient}.
Second, combine the pairwise estimates either in sequential stages~\cite{agarwal2011building, agarwal2010bundle,frahm2010building,snavely2006photo,snavely2008skeletal} or by using combining poses alone (without considering a 3-D structure) in a \emph{pose-graph} approach~\cite{carlone2015initialization}. 
The fourth and last step is to use Bundle Adjustment~(BA)~\cite{engels2006bundle,hartley2003multiple,triggs1999bundle}, which minimizes the reprojection error by considering jointly the motion and the structure.

Building a map of an unknown environment is an essential step for robot locomotion in GPS-denied environments. Simultaneous Localization and Mapping (SLAM) is the task of estimating the state of a robot while building a map of the environment at the same time, which has received a lot of attention in the past three decades due to its widespread applications \cite{cadena2016past}. Visual SLAM is a variant of the SLAM problem where only visual information obtained from a camera is used for the task \cite{taketomi2017visual}.

The state of the art approach for SLAM is based on a pose graph formulation where nodes represent robot poses at different instances and landmark positions, and edges represent relative pose measurements between node pairs. The full trajectory of the robot is estimated from all of measurements by finding a maximum a posteriori (MAP) solution \cite{olson2006fast,dellaert2006square}. This step is typically carried out by least squares error minimization and is highly sensitive to initialized values and the unavoidable presence of outlier measurements.

Edges in the pose graph can be divided into two categories: \emph{ego motion} edges which correspond to temporally close measurements; e.g.\ visual odometry measurements, and \emph{loop closure} edges which correspond to temporally distant measurements, e.g.\ when a location is revisited. Due to abundance of ego-motion measurements and high certainty in data associations made within them in visual SLAM, outliers in such measurements can be removed by a sliding window optimization approach, as suggested in \cite{strasdat2011double}, using robust M-estimators \cite{bosse2016robust,hartley2013rotation} or more advanced methods \cite{tron2016survey,carlone2015initialization}. Outliers in loop closure edges, however, pose a great challenge and are arguably the main cause of failures in the mapping procedure. Such edges are mainly caused by perceptual aliasing, i.e., different locations in the environment that appear similar or produce similar perceptual features.

Previous approaches for outlier detection usually rely on an initial trajectory guess and either try to mitigate the effect of outliers using M-estimators\cite{agarwal2013robust,olson2013inference,lee2013robust} or attempt to directly identify them\cite{sunderhauf2012switchable,sunderhauf2012towards,latif2013robust,carlone2014selecting,graham2015robust} within a single map. When there are more than one map, e.g.\ when multiple robots are performing SLAM or when multiple maps are generated by a robot, finding outliers within inter-map loop closures is more challenging since an initial guess is not available. In \cite{lajoie2019modeling} an optimization based approach is introduced which does not rely on an initial guess for a single map. In \cite{mangelson2018pairwise}, a set maximization approach is proposed for finding consistent loop closures but its limited to two maps.

\noindent\myparagraph{Paper contributions} We propose a distributed probabilistic approach for outlier detection between any number of maps. Our algorithm checks for the geometric consistency of the rotation measurements in loops within the pose-graph and decides if each loop-closure edge is an inlier or outlier, without relying on a trajectory estimate. We use a Gaussian additive noise model for rotation measurements and use the overall rotational error in cycles to infer the inlier/outlier probabilities. We use the Expectation-Maximization algorithm to fine-tune the parameters of the distribution of measurement errors and present simulation results. For the inference step required by our algorithm, we utilize Belief Propagation (BP). In addition, we will also present a novel inference algorithm based on Alternating Direction Method of Multipliers (ADMM) which has convergence guarantees.
 At the end, we present simulation results that evaluates the performance of our algorithm. We also use our algorithm to detect outliers between four real-world maps and present the results after merging.

\section{Probabilistic Model}
In this section we describe the additive Gaussian noise model used for modeling the errors on single edges and along graph cycles, as well as the graphical model used to relate the inlier versus outlier probabilities for each edge with the evidence provided by the geometric consistency of cycles.

\subsection{Modeling uncertainty in measurements}
\label{sec:edge-uncertainty-model}
We denote a pose graph by $\cG=(\cV,\cE,\cT)$ with vertices $\cV=\{1,\dots, n\}$ and edges $\cE\subseteq\cV\times\cV$, such that each edge represents a measured transformation $\tilde{\vct T}_{ij}$ between the poses of the sensor at instances $i$ and $j$, i.e., $\tilde{\vct T}_{ij}\approx \vct T_j \vct T_i^{-1}$. Each pose $\mT_i$ is represented as a member of a Lie group, e.g., $\SO(d)$, $\SE(d)$, or $\SIM(d)$ for $d\in\{2,3\}$. We denote as $\cT=\{\tilde{\vct T}_{ij}\}_{(i,j)\in\cE}$ the set of all measured relative poses.

A Lie group is a group that is also a smooth differentiable manifold; in the aforementioned matrix Lie groups, members can be represented with real valued square matrices. 

For the sake of simplicity, we limit our attention to $\SO(3)$, leaving the applications of the tools developed in this paper to other Lie groups as a venue for future work; this case leads to an easier propagation of the errors, but, as we will show in our simulations and experiments, it already provides significant benefits in the detection of outliers.


We model errors over rotations through a Gaussian distribution in local exponential coordinates, i.e., the distribution is defined in the tangent space at the mean, and mapped to the Lie group via the exponential map. More formally, we have:
\begin{equation}
\begin{aligned}
\bm\epsilon &\sim \cN(\vct 0,\vct\Sigma) \\
\tilde{\vct R} &= \exp(\hat{\bm\epsilon})\, \vct R
\end{aligned}
\end{equation}
where $\bm\epsilon\in\real{3}$ is a zero-mean Gaussian random variable with covariance matrix $\vct \Sigma\in\real{3\times 3}$, and $\hat{\bm\epsilon}\in\So(3)$ is a skew-symmetric matrix given by the \emph{hat operator}, that is,
\begin{equation}
\hat{\bm \epsilon}=
\begin{bmatrix}
0 & -\epsilon_3 & \epsilon_2 \\
\epsilon_3 & 0 & -\epsilon_1 \\
-\epsilon_2 & \epsilon_1 & 0
\end{bmatrix}.
\end{equation}

We assume that, for inlier measurements, the magnitude of the vector $\bm\epsilon$, which is the amount of rotation noise in radians, is small. This is our justification for the following lemma:

\begin{lemma}
The first order approximation of the uncertainty in composition of two uncertain rotations $\tilde{\vct R}_1\sim\cN_{\SO(3)}(\vct{R}_1,\vct{\Sigma}_1) $ and $\tilde{\vct R}_2\sim\cN_{\SO(3)}(\vct{R}_2,\vct\Sigma_2)$ is given by:
\begin{equation}
\tilde{\vct R}_2 \tilde{\vct R}_1\sim\cN_{\SO(3)}(\vct{R}_2\vct{R}_1,\vct{\Sigma}_2+\vct R_2\vct{\Sigma}_1\vct R_2^\trp)
\label{eq:uncertainty}
\end{equation}
\end{lemma}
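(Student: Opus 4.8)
The plan is to write each noisy rotation in the exponential-coordinate form of the model, push the left factor through the other exponential using the adjoint action of $\SO(3)$, and then merge the two exponentials by a first-order Baker--Campbell--Hausdorff (BCH) expansion. First I would set $\tilde{\vct R}_1 = \exp(\hat{\bm\epsilon}_1)\,\vct R_1$ and $\tilde{\vct R}_2 = \exp(\hat{\bm\epsilon}_2)\,\vct R_2$ with $\bm\epsilon_i\sim\cN(\vct 0,\vct\Sigma_i)$ \emph{independent} (the two relative-pose measurements live on different edges), so that
\[
\tilde{\vct R}_2\tilde{\vct R}_1 = \exp(\hat{\bm\epsilon}_2)\,\vct R_2\,\exp(\hat{\bm\epsilon}_1)\,\vct R_1 .
\]
Then I would use the $\SO(3)$-equivariance of the hat map, $\vct R\,\hat{\vct u}\,\vct R^\trp = \widehat{\vct R\vct u}$ for all $\vct R\in\SO(3)$, $\vct u\in\real 3$, which (since conjugation commutes with the matrix exponential) gives $\vct R_2\exp(\hat{\bm\epsilon}_1) = \exp(\widehat{\vct R_2\bm\epsilon_1})\,\vct R_2$ and hence
\[
\tilde{\vct R}_2\tilde{\vct R}_1 = \exp(\hat{\bm\epsilon}_2)\,\exp(\widehat{\vct R_2\bm\epsilon_1})\,\vct R_2\vct R_1 .
\]

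Next I would apply BCH, $\exp(\hat{\vct a})\exp(\hat{\vct b}) = \exp\!\big(\widehat{\vct a + \vct b} + \tfrac12\,\widehat{[\vct a,\vct b]} + \cdots\big)$; under the standing small-noise assumption all bracket (and higher) terms are of second order in the $\|\bm\epsilon_i\|$ and are dropped, leaving $\tilde{\vct R}_2\tilde{\vct R}_1 \approx \exp(\hat{\bm\epsilon})\,\vct R_2\vct R_1$ with $\bm\epsilon \defeq \bm\epsilon_2 + \vct R_2\bm\epsilon_1$. Finally, $\bm\epsilon$ is an affine combination of two independent zero-mean Gaussians with the deterministic matrix $\vct R_2$, hence itself zero-mean Gaussian, and
\[
\mathbb E[\bm\epsilon\bm\epsilon^\trp] = \mathbb E[\bm\epsilon_2\bm\epsilon_2^\trp] + \vct R_2\,\mathbb E[\bm\epsilon_1\bm\epsilon_1^\trp]\,\vct R_2^\trp = \vct\Sigma_2 + \vct R_2\vct\Sigma_1\vct R_2^\trp ,
\]
the cross terms vanishing by independence and zero mean. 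Since the result is already written as $\exp(\hat{\bm\epsilon})\,(\vct R_2\vct R_1)$, i.e.\ in exactly the left exponential-coordinate convention of the model with mean $\vct R_2\vct R_1$, this is precisely \eqref{eq:uncertainty}.

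The main obstacle is an honest accounting of what is discarded: the exact pushforward of two $\SO(3)$-Gaussians is \emph{not} Gaussian, so the claim is inherently a linearized, first-order statement, and the real content is that the neglected BCH remainder is $\bigO\!\big(\|\bm\epsilon_1\|\,\|\bm\epsilon_2\| + \|\bm\epsilon_1\|^2 + \|\bm\epsilon_2\|^2\big)$ and therefore vanishes faster than the retained first-order term in the small-angle regime that the paper invokes for inliers. I would also make explicit the two modeling hypotheses the identity quietly relies on --- independence of $\tilde{\vct R}_1$ and $\tilde{\vct R}_2$ (needed for the cross-covariance to drop) and the use of the same ``left'' exponential-coordinate convention throughout --- since relaxing either changes the conclusion, and I would note that the transported covariance $\vct R_2\vct\Sigma_1\vct R_2^\trp$ is simply the $\SO(3)$-adjoint action on $\vct\Sigma_1$, which is the structurally natural object here.
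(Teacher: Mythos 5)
Your proposal is correct and follows essentially the same route as the paper's own proof: exponential coordinates, the adjoint identity $\vct R_2\exp(\hat{\bm\epsilon}_1)=\exp(\widehat{\vct R_2\bm\epsilon_1})\vct R_2$, and a first-order BCH truncation yielding $\bm\epsilon=\bm\epsilon_2+\vct R_2\bm\epsilon_1$. Your additional remarks---making the independence assumption and the covariance computation explicit, and quantifying the discarded BCH remainder---are details the paper leaves implicit, but they do not change the argument.
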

\begin{proof}
By definition, we have:
\begin{equation}
\tilde{\vct R}_2 \tilde{\vct R}_1=\exp(\hat{\bm\epsilon}_2)\vct R_2\exp(\hat{\bm\epsilon}_1)\vct R_1
\end{equation}
by using the Adjoint of $\SO(3)$ \cite{kobayashi1963foundations}, we can transform $\bm\epsilon_1$ to the tangent space of $\vct R_2$, i.e. $\mR_2\exp(\hat{\bm\epsilon}_1) = \exp(\small(\mR_2\bm\epsilon_1\small)^\wedge)\mR_2$. By substitution we then obtain:
\begin{equation}
\begin{aligned}
\exp(\hat{\bm\epsilon}_2)\vct R_2\exp(\hat{\bm\epsilon}_1)\vct R_1 &= 
\exp(\hat{\bm\epsilon}_2)\exp(\small(\vct R_2\bm\epsilon_1\small)^\wedge)\vct R_2\vct R_1
\\&\approx
\exp\big( (\bm\epsilon_2+\vct R_2\bm\epsilon_1)^\wedge \big)\vct R_2\vct R_1 
\end{aligned}
\end{equation}
where  approximation in the last term is given by the Baker-Campbell-Hausdorff (BCH) formula \cite{wang2008nonparametric}, after ignoring terms of order of $\bm\epsilon_2\times \mR_2\epsilon_1$ and higher. 
\end{proof}
The truncation of the BCH formula is justified by our assumption that the inlier errors are relatively small.

In addition to the Gaussian noise model, we make the following assumption about $\vct\Sigma$:
\begin{assumption}
  Uncertainties in rotations are isotropic, i.e. $\vct\Sigma_i=\sigma_i^2\mI_3$ where $\mI_3$ is the identity matrix.
\label{assump-isotropic}
\end{assumption}

As a consequence of Assumption \ref{assump-isotropic}, the distribution of uncertainties in exponential coordinates is spherical, and, using \eqref{eq:uncertainty}, the distribution of the composition of a subset $\cS\subset \cV$ of noisy rotations is given by:
\begin{equation}\label{eq:prod-distribution}
\prod_{i\in\cS} \tilde{\vct R}_i \sim \cN_{\SO(3)}\Big(\,\textstyle\prod_{i\in\cS} \vct R_i \: ,\, (\textstyle\sum_{i\in\cS} \sigma_i^2)\vct I_3\Big).
\end{equation}

If all $\sigma_i$'s are equal, the resultant covariance matrix is given by $m\sigma^2\mI_3$, where $m=|\cS|$. Since the expected length of a zero-mean spherical Gaussian random variable $\bm \varepsilon\sim\cN(\vct 0,\varsigma^2\mI_d)$ is tightly bounded as $\frac{d}{\sqrt{d+1}}\varsigma\leq\mathbb{E}(\|\bm \varepsilon\|)\leq\sqrt{d}\varsigma$ \cite[Definition 3.1]{chandrasekaran2012convex}, for small enough $m$ and $\sigma$ the expected value of noise is proportional to $\sqrt{m}$. In fact, numerical experiments done in \cite[Figure~3]{enqvist2011non} validates this approximation.

We model the distribution for each measurement $R_e$ along an edge $e\in\cE$ with a Gaussian mixture model with two modes, one for being inliers and the other for outliers. We use the Bernoulli indicator variable $x_e\in\{0,1\}$ to determine whether $e$ is an inlier ($x_e=0$) or an outlier ($x_e=1$), with (user-defined) prior probabilities $p(x_e=0)=\pi_e$ and $p(x_e=1)=\bar\pi_e=1-\pi_e$ such that $\pi_e+\bar\pi_e=1$. Following on the assumption \ref{assump-isotropic}, we assume all inlier edges to have uncertainty $\sigma^2\vct I_3$ and all outlier edges to have uncertainty $\bar\sigma^2\vct I_3$, where $\bar\sigma \gg \sigma$; note that a sufficiently large value of $\bar\sigma$ in practice leads to an approximation of the uniform distribution.

\subsection{Graphical model for evidence along cycles}
A simple cycle is a closed chain of edges where each edge appears only once. Every simple cycle $c$ in a pose-graph corresponds to an ordered set of rotation measurements along the edges of the cycle, and the composition of these rotations should, ideally, be close to the identity. 

The intuition behind this idea is that, by transforming a reference frame following the rotations over a cycle, we should ideally obtain return the frame to its initial pose. More formally, we denote this overall rotation by $\tilde{\mR_c}$, defined as:
\begin{equation}
\tilde{\mR}_c =\prod_{e\in c} \tilde{\mR}_e,
\end{equation}
where the order of multiplication is based on a directed walk over $c$. We den define $z_c$ to be the geodesic distance of $\tilde\mR_c$ from the identity, 

\begin{equation}
z_c =\frac{1}{\sqrt{2}}\|\log(\tilde{\vct R}_c)\|_F =\arccos\Big(\frac{\mathrm{tr}(\tilde{\vct R}_c)-1}{2}\Big),
\end{equation}

where $\norm{\cdot}_F$ is the Frobenius norm. We use \eqref{eq:prod-distribution} to model the distribution of $\tilde\mR_c$, giving a probabilistic model for $z_c$. Note that the variance of $\tilde\mR_c$ mainly depends on the length of the cycle and the number of outliers in the cycle.


Similarly to previous work that aims to use geometric relations in cycles in Structure from Motion \cite{zach2010disambiguating,enqvist2011non}, we propose the Bayesian network depicted in Figure \ref{Fig-Bayesian} as our generative graphical model. In this model, errors in the cycles serve as evidence for inferring the hidden inlier/outlier state random variables $x_e$ for each edge $e\in\cE$. In this network, every edge $e\in\cE$, and every cycle $c\in\cC$ of the original pose graph is modeled by a node in the Bayesian network, and each edge $e$ is connected to the cycles $c$ to which it belongs.
\begin{figure}
\centering
\begin{tikzpicture}
[
  node distance=1cm and 0cm,
  simple/.style={draw, circle, thick, text width=.3cm, align=center},
  observed/.style={simple, fill=gray},
  edge arrow/.style={-latex,thick}
]

\def\xshz{12 mm}
\def\xshx{10 mm}
\def\ysh{-20 mm}

\node (z1) [simple] at (0,0) {};
\node (z2) [simple] at (\xshz,0) {};
\node (z3) [simple] at (2*\xshz,0) {};
\node (z4) [simple] at (3*\xshz,0) {};
\node (zdot) at (4*\xshz,0) {$\vct\dots$};
\node (zm) [simple] at (5*\xshz,0) {};
\node (ze)[align=left] at (6*\xshz,0) {$\bm x$};

\node (x1) [observed] at (0,\ysh) {};
\node (x2) [observed] at (\xshx,\ysh) {};
\node (x3) [observed] at (2*\xshx,\ysh) {};
\node (x4) [observed] at (3*\xshx,\ysh) {};
\node (x5) [observed] at (4*\xshx,\ysh) {};
\node (xdot) at (5*\xshx,\ysh)  {$\vct\dots$};
\node (xl) [observed] at (6*\xshx,\ysh)  {};
\node (xc)[align=left] at (ze |- xl) {$\bm z$};

\begin{scope}[edge arrow]
\draw	(z1) -- (x1);
\draw	(z1) -- (x2);
\draw	(z1) -- (x4);
\draw	(z2) -- (x1);
\draw	(z2) -- (x2);
\draw	(z2) -- (x3);
\draw	(z2) -- (x5);
\draw	(z3) -- (x1);
\draw	(z3) -- (x2);
\draw	(z3) -- (x3);
\draw	(z3) -- (x4);
\draw	(z3) -- (x5);
\draw	(z4) -- (x2);
\draw	(z4) -- (x3);
\draw	(z4) -- (x4);
\draw	(z4) -- (x5);
\draw	(zm) -- (xl);
\end{scope}
\end{tikzpicture}

\caption{Bayesian network of the edges in a pose graph and its cycles. The upper nodes correspond to the edges and the bottom nodes correspond to cycles. The bottom nodes are shown in gray since the random variable assigned to them $z_c$ is known.}
\label{Fig-Bayesian}
\end{figure}
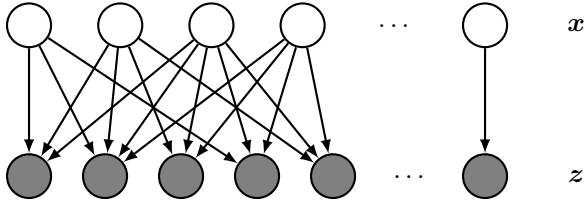

The joint probability distribution given by this graphical model for hidden states $\bm{x}\in \{0,1\}^{|\cE|}$ and cycle-consistency measurement errors $\bm{z}\in\real{|\bar{\cC}|}$ is
\begin{equation}
p(\bm x, \bm z) = \prod_{e\in\cE} p(x_e) \prod_{c\in\bar{\cC}} p(z_c \, | \, \bm x_c)
\label{Eq-GM}
\end{equation}

where $\bar{\cC}$ is the set of all cycles in $\cG$ and $p(x_e)$ is the prior probability of edge $e$, and $\bm x_c$ is the vector containing $x_e$ values for every $e\in c$. Letting $s=\bm 1^\trp\bm x_c $ be the number of outliers in $c$ for the configuration $\bm x$, the distribution $p(z_c \, | \, \bm x_c)$ is obtained from \eqref{eq:prod-distribution}, where the covariance matrix given by $\varsigma_c^2(\bm x_c) \mI_3 = \big( s \bar\sigma^2 + (|c|-s)\sigma^2 \big)  \vct I_3$ where $|c|$ is the length of the cycle.

As was mentioned in the introduction, we only need to consider loop closure edges in $\cG$. The ego motion edges contain no outlier measurements, hence we set the priors $p(x_e=0)=\pi_e$ to one for any ego motion edge $e$. 
Moreover, using all possible cycles is neither necessary nor practical for this task. The total number of cycles in a graph, in general, grows combinatorially with the size of the graph, leading to a proportional increase in the computational cost. To deal with this issue, we restrict ourselves to cycles from a \emph{Minimum Cycle Basis} $\cC_{min}\in2^{\bar{\cC}}$ of the pose graph obtained using the de Pina's method \cite{mehlhorn2007implementing}. This reduces the number of cycles to $\cO(|\cE_{lc}|)$, covers all the edges in bi-connected components of the pose-graph, and any every other cycle can be obtained as a combination of cycles in the basis. Moreover, the MCB, which is minimal in the sense of the number of times each edge appears in cycles in $\cC_{min}$, has the further side effect of reducing the number of connections in the Bayesian graphical model of \ref{Fig-Bayesian}. Moreover, from the discussion in Section \ref{sec:edge-uncertainty-model}, short cycles reduce the uncertainty in the observations $z_c$ along cycles with only inliers (in future work, we will explore the option of finding a basis that is minimal in the sense of the sum of the errors $z_c$).



\section{Inference}
In this section, we assume that the set of parameters $\Theta=\{\sigma,\bar\sigma,\Pi\}$ where $\Pi=\{\pi_e\}_{e\in\cE_{lc}}$ is given, and that we aim to find the marginal probabilities $\gamma_e\triangleq p(x_e|\vct z)$ for all $e\in\cE$ (in Section~\ref{sec:expectation-maximization}, we will extend the procedure to estimate $\sigma$,$\bar\sigma$ concurrently from the data). An exact solution to this probabilistic inference problem can easily become intractable as the complexity increases exponentially with the number of edges. Therefore, we will resort to approximation methods. We consider two options: first, we will apply Loopy Belief Propagation (BP), which represents the standard traditional choice for approximate inference in graphs, despite the fact that it does not guarantees convergence for general graphs; then, we will introduce a novel inference algorithm based on dual decomposition along cycles with the Alternating Direction Method of Multipliers (ADMM), which provides local convergence guarantees. It is shown in Section~\ref{sec:results} that the our proposed ADMM method outperforms BP in terms of outliers detection in our setting.
\subsection{Belief Propagation}
Belief Propagation is one of the most well known inference algorithms used for finding marginal and conditional probabilities, and is essentially a Variational Inference approach based on the minimization of the Bethe free energy \cite{yedidia2005constructing}. For graphical models with loops, BP usually provides a good estimate but may not converge. Even if converges, the given solution is generally not exact. We review here the factor graph version of BP via an example shown in Figure~\ref{Fig-BayesianExample}, where we give a pose graph with five edges and a total of three cycles, together with the corresponding Bayesian network and the equivalent factor graph. 

In BP, messages are sent between neighboring variables and factors according to the following equations \cite{yedidia2005constructing}:

\begin{align}
n_{e\rightarrow f_c}(x_e) &= \prod_{f\in N(e)\setminus f_c} m_{f\rightarrow e}(x_e), \label{Eq-VarToFactor}\\
m_{f_c\rightarrow e}(x_e) &=\sum_{\bm x_c\setminus x_e} f_c(\bm x_c) \prod_{i\in N(f_c)\setminus e} n_{i\rightarrow f_c}(x_i), \label{Eq-FactorToVar}
\end{align}
where \eqref{Eq-VarToFactor} shows the message sent from variable $e$ to factor $f_c$, and \eqref{Eq-FactorToVar} shows the message sent from factor $f_c$ to variable $e$. The notation $N(e)$ means the factors that are connected to random variable $x_e$ (including the prior $\pi_e$ which is constant) and $N(f_c)$ is the random variables connected to $f_c$. These messages are passed in an asynchronous order until convergence of beliefs (approximate marginals), which are computed from the equations \cite{yedidia2005constructing}:

\begin{align}
b_e(x_e) &\propto \prod_{f\in N(e)} m_{f\rightarrow e}(x_e), \label{Eq-BeliefVar} \\
b_c(\bm x_c) &\propto f_c(\bm x_c) \prod_{e\in N(f_c)} n_{e\rightarrow f_c}(x_e), \label{Eq-BeliefFactor}
\end{align}
where \eqref{Eq-BeliefVar} is the belief of a single random variable and is an approximation of $\gamma_e=p(x_e|\bm z)$, and \eqref{Eq-BeliefFactor} is the belief of all random variables connected to factor $f_c$ and an approximation of $\gamma_c\triangleq p(\bm x_c|\bm z)$ (the latter is use $\gamma_c$ only in for the Expectation-Maximization procedure in Section~\ref{sec:results}).






\begin{figure*}
\subfloat[Pictorial representation]
{\begin{tikzpicture}[scale=0.55]
  \input{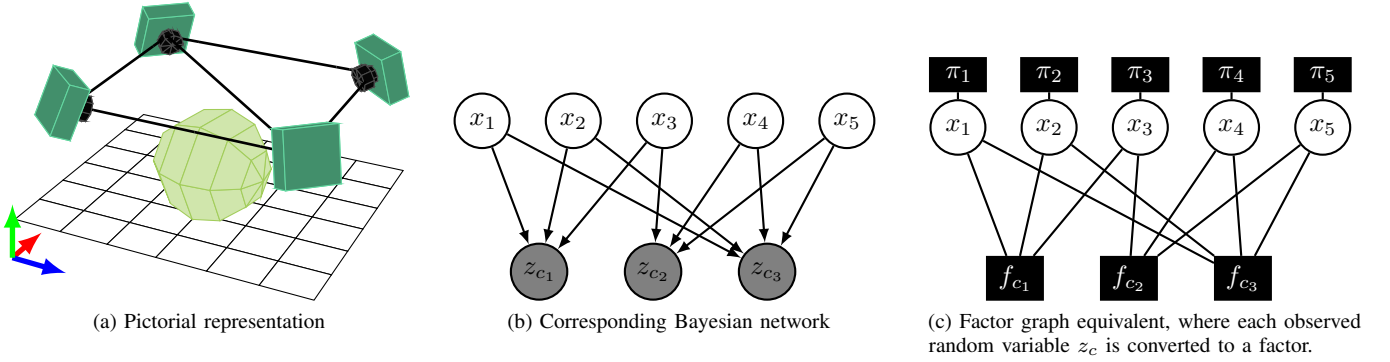} 
\end{tikzpicture}}\hfill
\subfloat[Corresponding Bayesian network]{\begin{tikzpicture}
[
  node distance=1cm and 0cm,
  simple/.style={draw, circle, thick, text width=.5cm, align=center, inner sep=2pt},
  observed/.style={simple, fill=gray},
  edge arrow/.style={-latex,thick}
]

\def\xshz{12 mm}
\def\xshx{15 mm}
\def\ysh{-20 mm}

\node (z1) [simple] at (0,0) {$x_1$};
\node (z2) [simple] at (\xshz,0) {$x_2$};
\node (z3) [simple] at (2*\xshz,0) {$x_3$};
\node (z4) [simple] at (3*\xshz,0) {$x_4$};
\node (z5) [simple] at (4*\xshz,0) {$x_5$};

\node (x1) [observed] at (0.5*\xshx,\ysh) {$z_{c_1}$};
\node (x2) [observed] at (1.5*\xshx,\ysh) {$z_{c_2}$};
\node (x3) [observed] at (2.5*\xshx,\ysh) {$z_{c_3}$};

\begin{scope}[edge arrow]
\draw	(z1) -- (x1);
\draw	(z2) -- (x1);
\draw	(z3) -- (x1);

\draw	(z3) -- (x2);
\draw	(z4) -- (x2);
\draw	(z5) -- (x2);

\draw	(z1) -- (x3);
\draw	(z2) -- (x3);
\draw	(z4) -- (x3);
\draw	(z5) -- (x3);
\end{scope}
\end{tikzpicture}
\subfloat[Factor graph equivalent, where each observed random variable $z_c$ is converted to a factor.]{\begin{tikzpicture}
[
  node distance=1cm and 0cm,
  simple/.style={draw, circle, thick, text width=.5cm, align=center, inner sep=2pt},
  observed/.style={simple, fill=gray},
  edge arrow/.style={-latex,thick}
]

\tikzstyle{observed}=[draw=black, rectangle, thick, align=center, fill=black, text=white, text width=.5cm]

\def\xshz{12 mm}
\def\xshx{15 mm}
\def\ysh{-20 mm}
\def\yshh{-7 mm}

\node (p1) [observed] at (0,-\yshh) {$\pi_1$};
\node (p2) [observed] at (\xshz,-\yshh) {$\pi_2$};
\node (p3) [observed] at (2*\xshz,-\yshh) {$\pi_3$};
\node (p4) [observed] at (3*\xshz,-\yshh) {$\pi_4$};
\node (p5) [observed] at (4*\xshz,-\yshh) {$\pi_5$};

\node (z1) [simple] at (0,0) {$x_1$};
\node (z2) [simple] at (\xshz,0) {$x_2$};
\node (z3) [simple] at (2*\xshz,0) {$x_3$};
\node (z4) [simple] at (3*\xshz,0) {$x_4$};
\node (z5) [simple] at (4*\xshz,0) {$x_5$};

\node (x1) [observed] at (0.5*\xshx,\ysh) {$f_{c_1}$};
\node (x2) [observed] at (1.5*\xshx,\ysh) {$f_{c_2}$};
\node (x3) [observed] at (2.5*\xshx,\ysh) {$f_{c_3}$};

\begin{scope}[thick]
\draw	(z1) -- (x1);
\draw	(z2) -- (x1);
\draw	(z3) -- (x1);

\draw	(z3) -- (x2);
\draw	(z4) -- (x2);
\draw	(z5) -- (x2);

\draw	(z1) -- (x3);
\draw	(z2) -- (x3);
\draw	(z4) -- (x3);
\draw	(z5) -- (x3);

\draw	(z1) -- (p1);
\draw	(z2) -- (p2);
\draw	(z3) -- (p3);
\draw	(z4) -- (p4);
\draw	(z5) -- (p5);
\end{scope}

\end{tikzpicture}

\caption{Example derivation of the factor graph for a small problem with four poses, five measurements, and three cycles}
\label{Fig-BayesianExample}
\end{figure*}

To force BP into convergence, we use a damping factor of $0.5$, as suggested in \cite[Chapter~22]{robert2014machine}.

\subsection{Alternating Direction Method of Multipliers}
The Alternating Direction Method of Multipliers (ADMM) provides a robust and decomposable algorithm for minimizing a convex problem by breaking them into smaller and easier to handle problems \cite{boyd2011distributed}. In this setting, ADMM guarantees global convergence at rate $\cO(\frac{1}{\delta})$ rate ($\delta$ is error) \cite{nishihara2015general}. It can also be used in non-convex problems, although in that case it will convergence to a local minimum.

In order to estimate $\gamma_e$ and $\gamma_c$, instead of marginalizing over $p(\bm x,\bm z)$ given in \eqref{Eq-GM}, we propose to marginalize over the local distribution of each cycle, namely,

\begin{equation}
p_c(\bm x_c,z_c)=p(z_c|\bm x_c)\prod_{e\in c} p(x_e),
\label{eq:localdist}
\end{equation}
and then force the marginals of each edge $e$ obtained from different cycles to agree on a common value. Intuitively, this strategy aims to preserve the statistical correlation (joint distribution) between edges in the same cycle, but it ignores the correlations across cycles.

More in detail, we can implement this strategy by solving a \emph{consensus} problem with ADMM \cite[Chapter 7]{boyd2011distributed}. 
We denote as $\hat{\bm v}_c\in\real{2^{|c|}}$ the vector containing all probabilities $p_c(\bm x_c|z_c)$ obtained from \eqref{eq:localdist} evaluated over all possible values of $\bm x_c\in\{0,1\}^{|c|}$. For each cycle $c$, we try to estimate a vector $\bm v_c$ such that \begin{enumerate*} \item $\bm v_c$ is close to  $\hat{\bm v}_c$, and \item when two distributions $\bm v_c$, $\bm v_{c'}$ for two overlapping cycles $c,c'$ are marginalized with respect to a common edge $e\in (c\cap c')$, the two results agree
\end{enumerate*}. We will parametrize the marginal distribution $\gamma_e$ by keeping track of the inlier probability alone, denoted as $w_e=p(x_e=0|\bm z)$. 
We can then formulate the following minimization problem:

\begin{equation}
\begin{aligned}
  \min_{\bm w, \{\bm v_c\}}& \ \sum_{c\in\cC} h_{c}\big(\bm v_c\big)\\
\text{subject to}& \ \vct p_{e,c}^\trp \bm v_{c} = w_e, \forall c\in \cC, e\in c\\
 &\ 0 \leq \bm w \leq 1
\end{aligned}
\label{eq:admm_formulation}
\end{equation}

In the above equation, $\bm w\in\real{|\cE_{lc}|}$ is the vector that contains all $w_e$ values, and the indicator vectors $\vct p_{e,c}\in\{0,1\}^{2^{|c|}}$ are a vectorial representation for obtaining the marginal inlier probability $w_e$ given the cycle distribution~$\bm v_c$. 

In \eqref{eq:admm_formulation}, each $h_c$ is considered a subproblem with its own local constraints that can be solved in a distributed fashion. As stated earlier, we want $\bm v_c$ to be close to $\hat{\bm v_c}$ with respect to some metric. If we choose the $2$-Wasserstein metric, $h_c$ will be formulated as follows:

\begin{equation}
  h_c\big(\bm v_c\big)=\left\{
\begin{array}{ll}
{\norm{\bm v_c-\hat{\bm v}_c}^2} & {\text {if } \bm 1^\trp \bm v_c = 1, 0\leq\bm v_c\leq1 ,} \\ 
{+\infty} & {\text {otherwise.}} 
\end{array}\right.
\end{equation}

In future work, we plan to evaluate other measures of similarity between $ c$ and $\hat{ c}$ (such as the Kullback–Leibler divergence).
Subproblems (cycles) $c$, $c'$ that share an edge are forced to agree through the constraints $\vct p_{e,c}^\trp \bm v_{c} =\vct p_{e,c'}^\trp \bm v_{c'}= w_e$. This problem formulation is very similar to a consensus optimization problem, the only difference is that we want a linear combination of the variables $\bm v_{c}$ to reach a consensus instead of considering the full variables $\bm v_c$. There is also a global constraint $0\leq \bm w \leq 1$ that needs to be satisfied. To apply ADMM, we reformulate the problem as follows,

\begin{equation}\label{equ:consensusAdmm}
\begin{aligned}
\min_{w_e} &\sum_{c\in\cC} h_c\big(\bm v_c\big) + g(\bm w)\\
\textrm{subject to } & {{ \mP_{c}} \bm v_c = \bm w_c,\ \forall c\in \cC}\\
\end{aligned}
\end{equation}

where $g(\bm w)$ is a indicator function which returns $+\infty$ if the constraint $0\leq\bm w\leq1$ is violated; the vector $\bm w_c\in\real{|c|}$ contains the elements $w_e$ of $\bm w$ for every $e\in c$ and $\mP \in \mathbb{R}^{|c|\times 2^{|c|}}$ is obtained by stacking the vectors $\vct p_{e,c}^\trp$  column-wise. The augmented Lagrangian for (\ref{equ:consensusAdmm}) is:
\begin{equation}
\begin{aligned}
L_{\rho} =&  \sum_{c\in\cC}\Big(h_c(\bm v_c) + \bm y_c^\trp(\mP_c \bm v_c - \bm w_c) \\
		    &+ \frac{\rho}{2} \|\mP_c \bm v_c - \bm w_c\|^2_2\Big) + g(\bm w)
\end{aligned}
\end{equation}
with dual variables $\bm y_c \in \real{|c|}$, and penalty parameter $\rho$. The ADMM iterations for this problem are given by:
\begin{equation}
\begin{aligned}
\bm v_c^{k+1} &:= \argmin_{\bm v_c}\Big( h_c(\bm v_c)+\bm y_c^{k\trp} {\mP_c} \bm v_c\\
 			   & \quad   +\frac{\rho}{2}\|{\mP_c}{\bm v_c}- {\bm w}_c^k\|^2_2\Big)\\
\bm w^{k+1} &:=  \argmin_{\bm w} \Big(g(\bm w)+\sum_{c\in\cC}\big( -\bm y_c^{k\trp} {\bm w}_c\\
 			       &\quad+ \frac{\rho}{2}\|{\mP_c} \bm v_c- {\bm w}_c\|^2_2 \big)\Big)\\
\bm y_c^{k+1} &:= \bm y_c^k + \rho(\mP_c \bm v_c- {\bm w}_c^k), 
\end{aligned}
\end{equation}
The local variables $\bm v_{c}$ and $\bm y_c$ are solved individually for each cycle. The solution for $\bm v_{c}$ is obtained by solving a quadratic programming problem (which can be done efficiently), and the solution for the global consensus variable $\bm w$ can be simplified as:
\begin{equation}
w^{k+1}_e := \max(0, \min(1, \omega^{k+1}_e))
\label{eq:admmave1}
\end{equation}
\begin{equation}
\omega^{k+1}_e = \frac{\sum_{c; e\in c}\left(\vct p_{e,c}^\trp\bm v^{k+1}_c+\frac{1}{\rho}(\bm y^k_c)_e \right)}{\sum_{c; e\in c} 1}
\label{eq:admmave2}
\end{equation}
In \eqref{eq:admmave2}, the denominator is the number of times edge $e$ appears in different cycles, and therefore $\omega^{k+1}_e$ is the average of marginalized values for edge $e$ plus the component of $\bm y_c^k$ that corresponds to $e$ over cycles that contain $e$. In \eqref{eq:admmave1}, the values of $ \omega^{k+1}$ are projected to be between zero and one.

This problem will reach optimality when the primal residual $r^k$ and dual residuals $t^k$ converge to zero, where:
\begin{equation}
\begin{aligned}
r^{k}&=\sum_{c\in\cC}\left\|\mP_c \bm v_c^k-{\bm w}_c^k\right\|_{2}^{2}\\
t^{k}&= \rho^{2}\sum_{e\in\cE_{lc}}\sum_{c; e\in c} (w_e^{k}- w_e^{k-1})^{2}
\end{aligned}
\end{equation}
The penalty parameter $\rho$ plays a very important role in the convergence speed of this method. Intuitively, small $\rho$ allows intermediate solutions to have a much lower cost while somewhat ignoring the primal feasibility, and makes the solution less impacted by initial value and easier to escape from the local minima, whereas a large $\rho$ will place a large penalty on violating the consensus constraints, but tends to produce small primal residuals. As suggested in \cite[Chapter 3]{boyd2011distributed}, we start with a small $\rho$, and gradually change the value of $\rho$ based on primal and dual residual, using the following dynamic update rule:
\begin{equation}
\rho^{k+1}=\left\{
\begin{array}{ll}
{\tau^{incr}\rho^{k}} & {\text{if $r^k\leq \mu t^k$}} \\ 
{\rho^{k}/\tau^{decr}} & {\text{if $t^k\leq \mu r^k$}} \\
{\rho^{k}}	& {\text{otherwise.}}
\end{array}\right.
\end{equation}

where $\mu >1$, $\tau^{decr}>1$ and $\tau^{decr}>1$ are constant parameters. 


\section{Expectation Maximization}
\label{sec:expectation-maximization}
In the previous sections, we assumed that the parameters $\Theta=\{\sigma,\bar\sigma,\Pi\}$ were given. However, this assumption is not true and these parameters need to be estimated.
By including parameters in the distribution, we rewrite \eqref{Eq-GM} as:
\begin{equation}
p(\bm x, \bm z|\Theta) = \prod_{e\in\cE} p(x_e |\pi_e) \prod_{c\in\cC} p(z_c \, | \, \varsigma_c(\bm x_c))\end{equation}

where the first term is a given by Bernoulli distribution. With a little abuse of notation, we assume $\pi_e$ is $p(x_e=0)$ and $\bar\pi_e=1-\pi_e$ which yields $p(x_e|\pi_e)=\pi_e^{1-x_e}\bar\pi_e^{x_e}$. The second term is a wrapped Gaussian mixture distribution:
\begin{equation}
p(z_c \, | \, \varsigma_c(\bm x_c)) = \frac{1}{\psi_c} \frac{\varsigma_c^{-3}}{\phi(\varsigma_c)}\exp(\frac{-z_c^2}{2\varsigma_c^{2}})
\end{equation}
with $\varsigma_c(\bm x)=\sqrt{(\vct 1^\trp \bm x_c)\bar\sigma^2+(|c|-\vct 1^\trp \bm x_c)\sigma^2}$ and $\phi(\varsigma_c)$ is a normalizing constant which normalizes the wrapped Gaussian distribution and $\psi_c$ normalizes over all possible configurations of $\bm x$:
\begin{equation}
\psi_c = \sum_{s=0}^{|c|}\binom{|c|}{s} \frac{\varsigma_c^{-3}(s)}{\phi(\varsigma_c(s))}\exp(\frac{-z_c^2}{2\varsigma_c^{2}(s)})
\end{equation}
and the term $\varsigma_c^{-3}$ comes from the denominator of the Gaussian probability density function, which is $\sqrt{\det(\varsigma_c^2\mI_3)}$. The value of $\varsigma_c(\bm x)$ only depends on the number of outliers $s=\vct 1^\trp \bm x$, hence we can denote it is $\varsigma_c(s)$.
The log-likelihood function is given by:
\begin{equation}
\begin{aligned}
\cL(\Theta;\bm x,\bm z) =& \log(p(\bm x, \bm z|\Theta) ) \\
 =& \sum_{e\in\cE} (1-x_e)\log(\pi_e) + x_e\log(\bar\pi_e) \\
+& \sum_{c\in\cC} -3\log\varsigma_c - \frac{z_c^2}{2\varsigma_c^2} -\log\big(\psi_c \phi(\varsigma_c) \big)
\end{aligned}
\end{equation}
In the Expectation step, we find the expectation of the log likelihood function of $\Theta^i$ with respect to the current distribution of $\bm x$ given $\bm z$ and previous estimate of parameters $\Theta^{i-1}$:
\begin{equation}
\begin{aligned}
Q(\Theta^{(i)}|\Theta^{(i-1)})&= \mathbb{E}_{\bm x | \bm z,\Theta^{(i-1)} } [\cL] \\
 &= \sum_{\bm x \in \mathbb{Z}_2^{|\cE|}} \cL(\Theta^{(i)};\bm x,\bm z)p(\bm x|\bm z,\Theta^{(i-1)})
\label{Eq-Q}
\end{aligned}
\end{equation}
We denote $\gamma_e^{(i-1)}\triangleq p(x_e|\bm z, \Theta^{(i-1)})$ as the responsibility of edge $e$ and $\gamma_c^{(i-1)}\triangleq p(\bm x_c|\bm z, \Theta^{(i-1)})$ as the responsibility of cycle $c$, estimated either through BP or ADMM using the parameter estimates from last iteration. Now, by expanding \eqref{Eq-Q} we get:
\begin{align}
Q&(\Theta^{(i)}|\Theta^{(i-1)})=  \sum_{e\in\cE} \sum_{x_e\in\mathbb{Z}_2} p(x_e|\bm z,\Theta^{(i-1)})\log p(x_e|\pi_e^{(i)}) \label{Eq-QQ1} \\
&+ \sum_{c\in\cC} \sum_{\bm{x}_c\in\mathbb{Z}_2^{|c|}} p(\bm x_c|\bm z,\Theta^{(i-1)}) \log p(z_c|\bm{x}_c,\sigma^{(i)},\bar\sigma^{(i)}) \label{Eq-QQ2}
\end{align}

In the Maximization step, we find $\Theta^{(i)} = \argmax_{\Theta} Q(\Theta | \Theta^{(i-1)})$. For $\Pi^{(i)}$, we get:
\begin{equation}
\pi_e^{(i)} = \gamma_e^{(i-1)} 
\end{equation}
but for  $\sigma^{(i)}$ and $\bar\sigma^{(i)}$ it is not as straightforward. Each term in the summation in \eqref{Eq-QQ2} is a quasiconcave function, but their sum need not be quasiconcave. Therefore, a grid-search is utilized to find $\sigma$ and $\bar\sigma$ at each iteration.

\section{Simulations and Experiments on map merging}
\label{sec:results}
In this section, we provide performance results of our outlier detection algorithm over synthetic and real data.
For the synthetic data, for every simulation a pose graph with two maps with random poses are generated where each map has 15 nodes. At every iteration, $m$ edges are added between the two maps where $m$ varies from 10 to 200 with 5 increments. For every given $m$, from 1 to $m-1$ edges are selected to be outliers (with 1 increments). Inlier and outlier edges are given a random noise rotation with a random direction, and the magnitude of noise uniformly seclected within $1.6^\circ\leq\|\bm\epsilon\|\leq 2.4^\circ$ for inliers and $16^\circ\leq\|\bar{\bm\epsilon}\|\leq 24^\circ$ for outliers. Total number of generated graphs is $8,317$ and both BP and ADMM inference algorithms were used in on the same graphs. 

In Fig. \ref{SFig-PRa}, the precision-recall points for each of these simulations are plotted. n Fig. \ref{SFig-PRb}, the ratio of detected outliers is plotted versus the ratio of the outlier edges to total loop closure edges. It is clear that ADMM inference performs better compared to BP, as an overall higher precision and recall is achieved. Also, as the ratio of outliers to loop closure edges increase, the performance of BP continuously deteriorates. But ADMM is able to perform better. In situations where nearly half of the loop closures are outliers, the performance of ADMM is at lowest, which could be because it becomes harder to distinguish between the classes. 

In Fig. \ref{Fig-SLAMpcl}\, we present the result of implementing our classifier on actual data obtained from an office environment. Four independent sequence of RGB-D images were obtained using Intel RealSense D435 camera, and were processed with ORB-SLAM2 \cite{mur2017orb}. The result of merging maps is shown with and without removing outliers. With removing outliers, the obtained pointcloud is sharper and objects are better aligned.

\begin{figure}
  \centering
  \subfloat[]{%
      \includegraphics[]{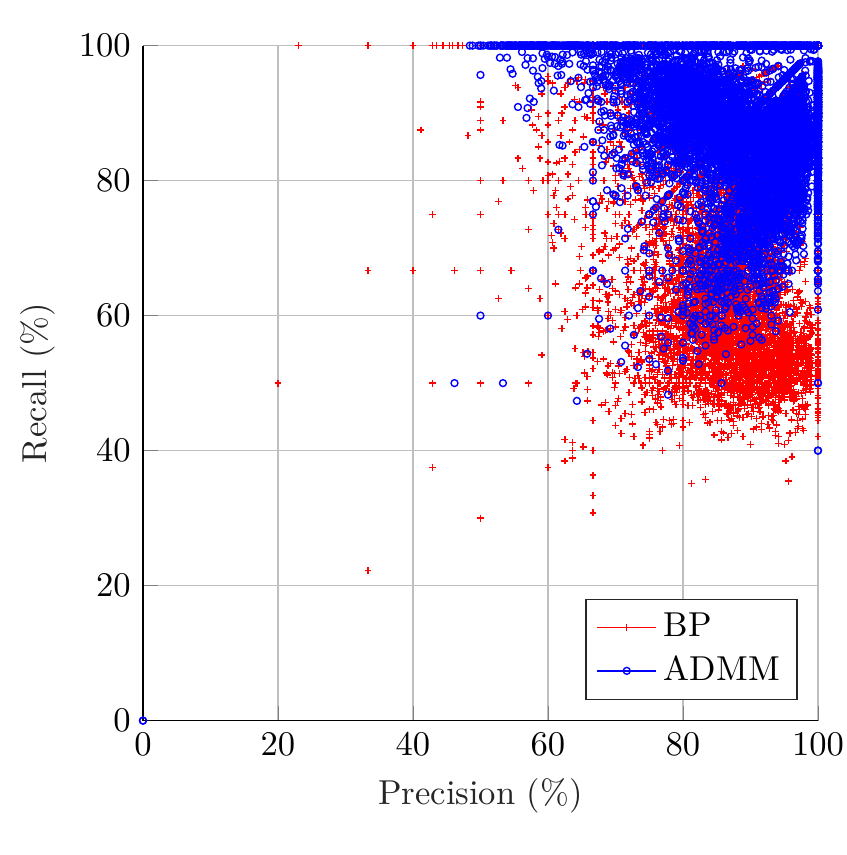}\label{SFig-PRa}
     }
     \\
     \subfloat[]{%
      \includegraphics[]{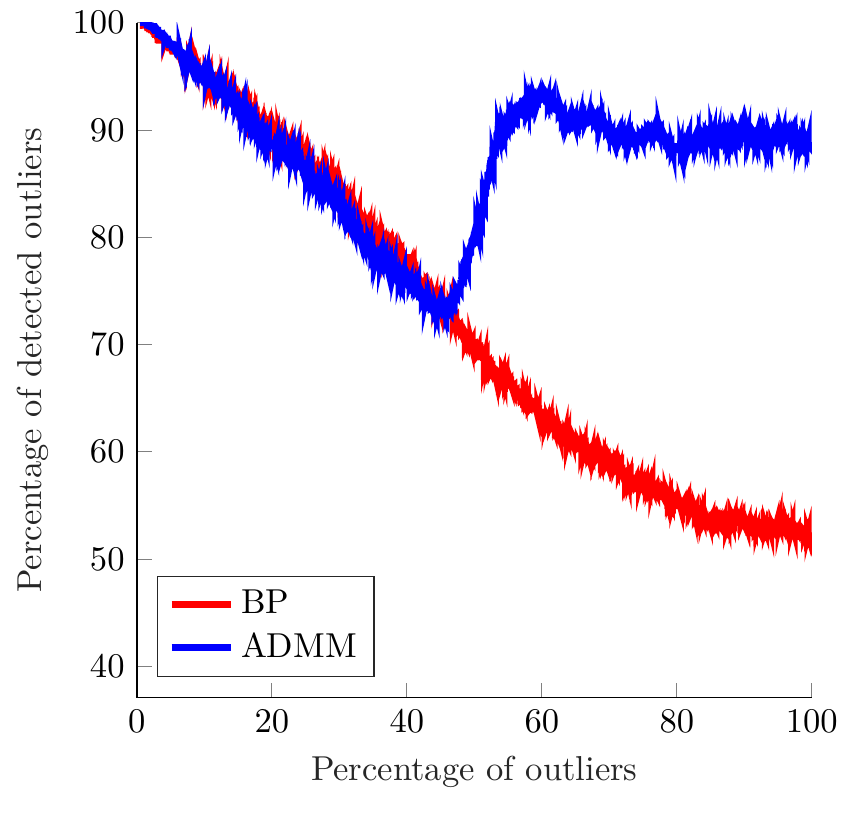}\label{SFig-PRb}
     }
\caption{Precision ($\frac{TP}{TP+FP}$)-Recall ($\frac{TP}{TP+FN}$) plot of experiments is depicted in\protect\subref{SFig-PRa}. The ratio of detected outliers to total number of outliers (Recall) versus the ratio of outlier loop closure edges to total loop closure edges is given in \protect\subref{SFig-PRb}.}
          \label{Fig-PR}
\end{figure}




\begin{figure*}
  \centering
  \subfloat[]{%
      \includegraphics[trim={0cm 0cm 0cm 0cm},clip,scale=0.11]{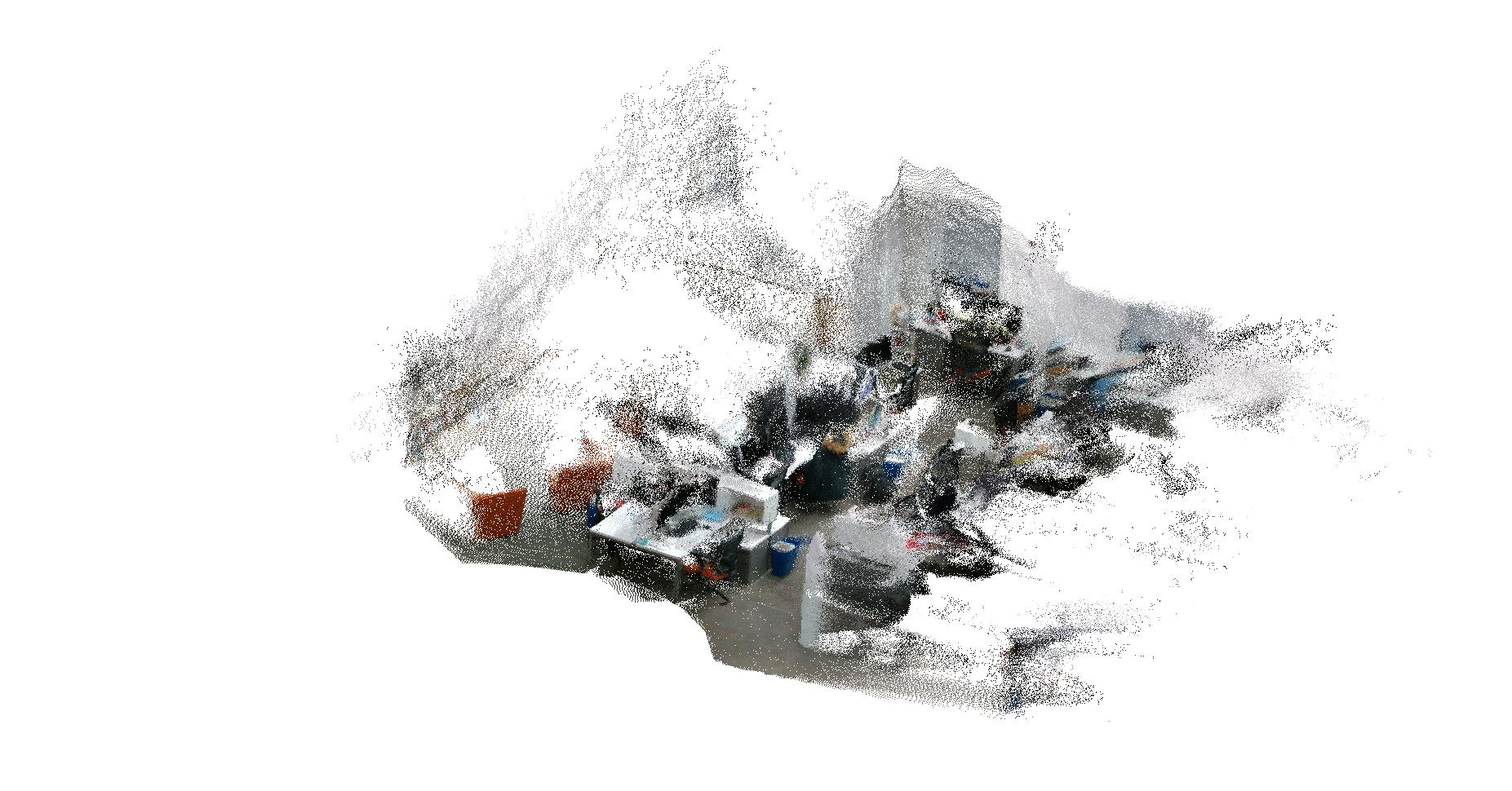}\label{SFig-SLAMa}
     }
     \hfill
     \subfloat[]{%
      \includegraphics[trim={0cm 0cm 0cm 0cm},clip,scale=0.11]{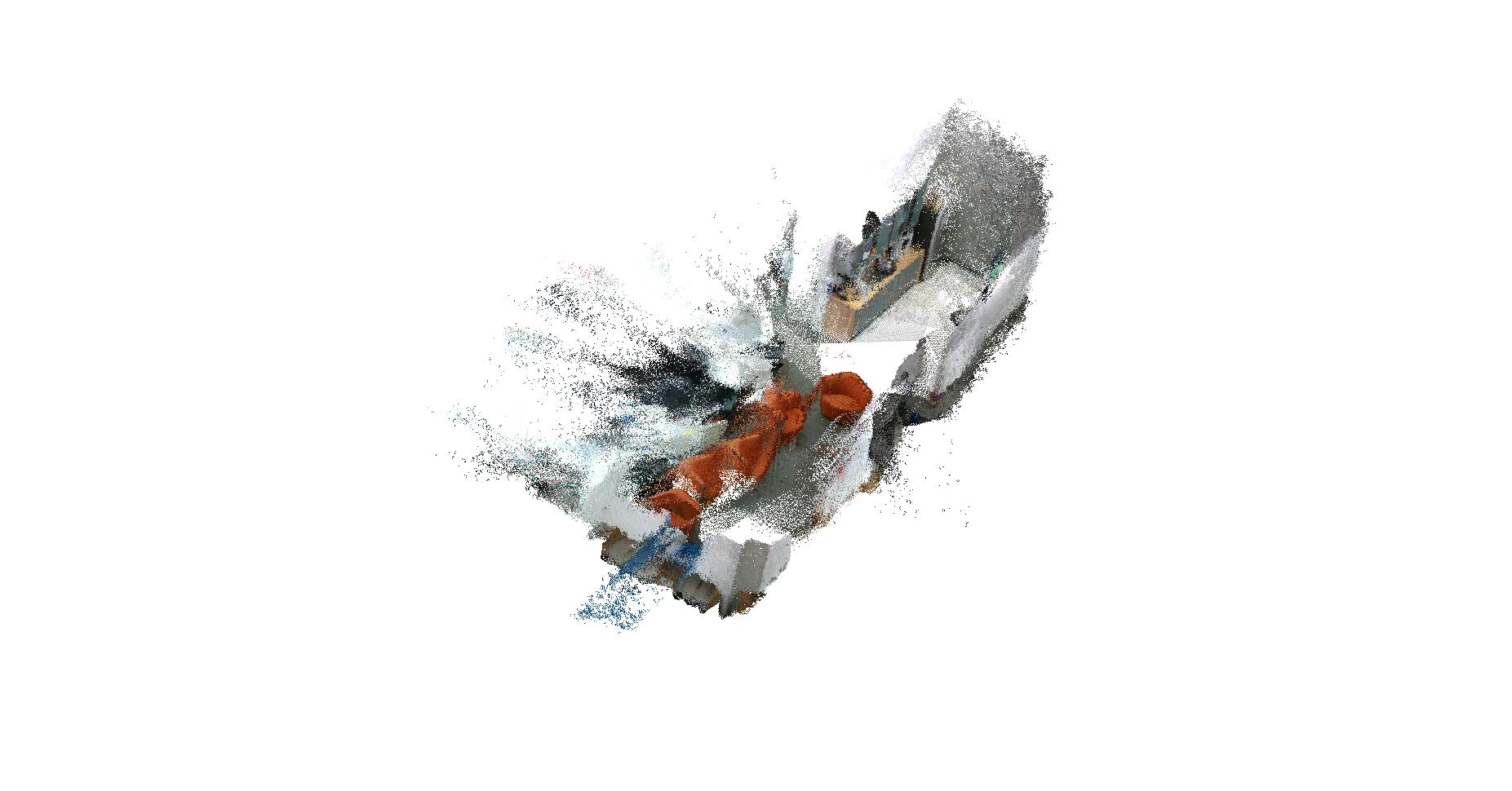}\label{SFig-SLAMb}
     }
     \\
     \subfloat[]{%
      \includegraphics[trim={0cm 0cm 0cm 0cm},clip,scale=0.11]{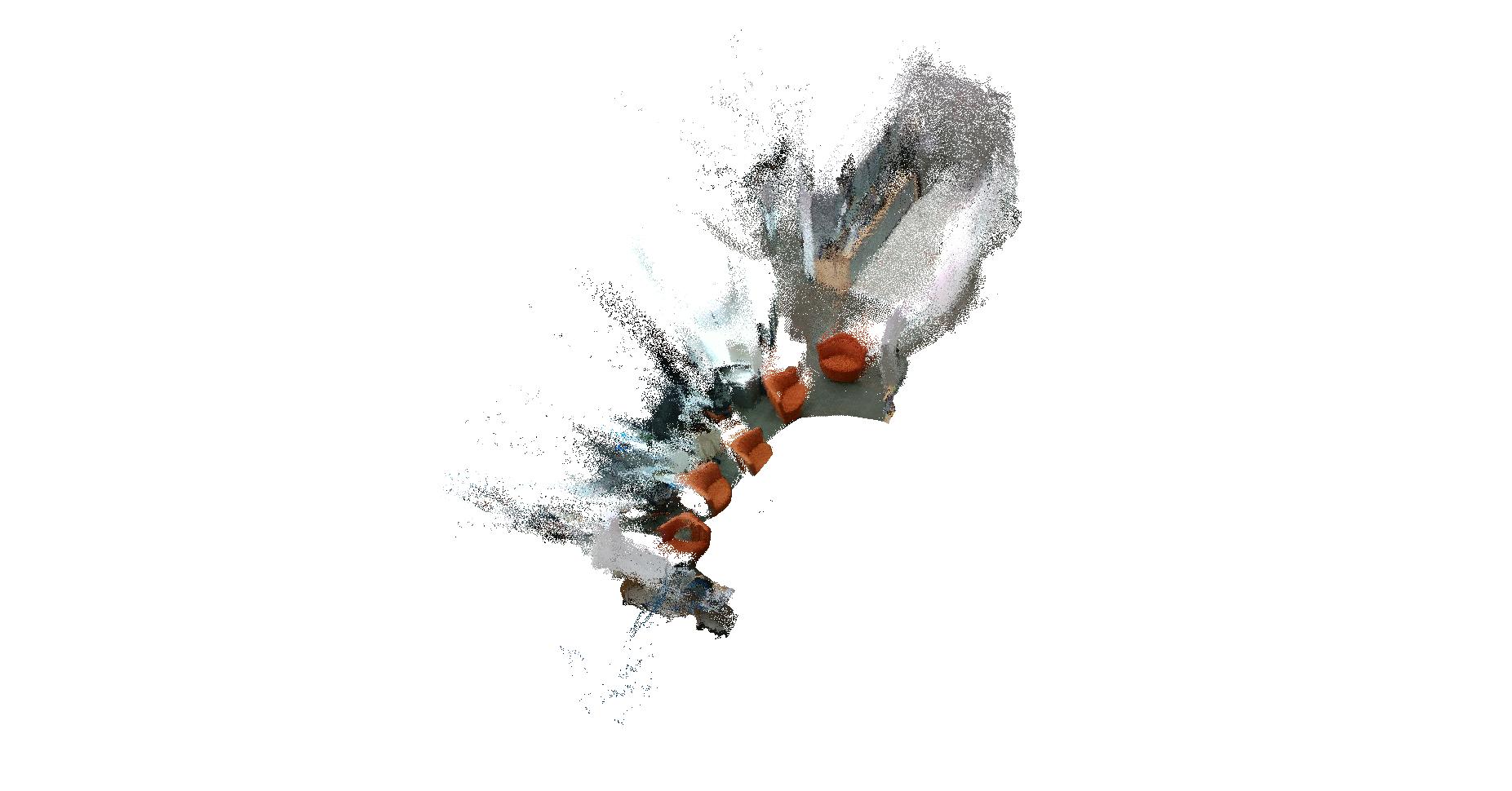}\label{SFig-SLAMc}
     }
     \hfill
     \subfloat[]{%
      \includegraphics[trim={0cm 0cm 0cm 0cm},clip,scale=0.11]{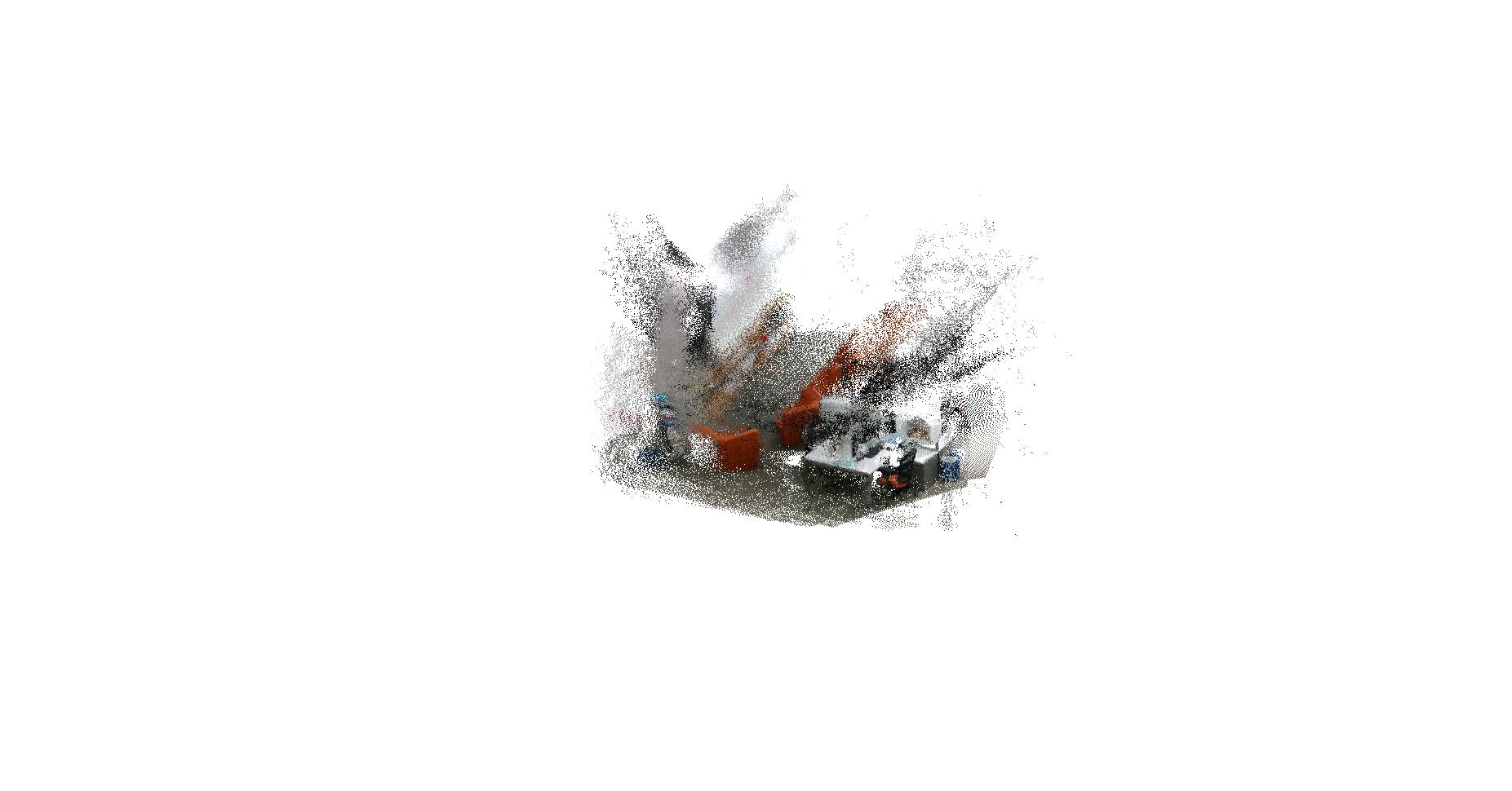}\label{SFig-SLAMd}
     }
     \\
     \subfloat[]{%
      \includegraphics[trim={0cm 0cm 0cm 0cm},clip,scale=0.14]{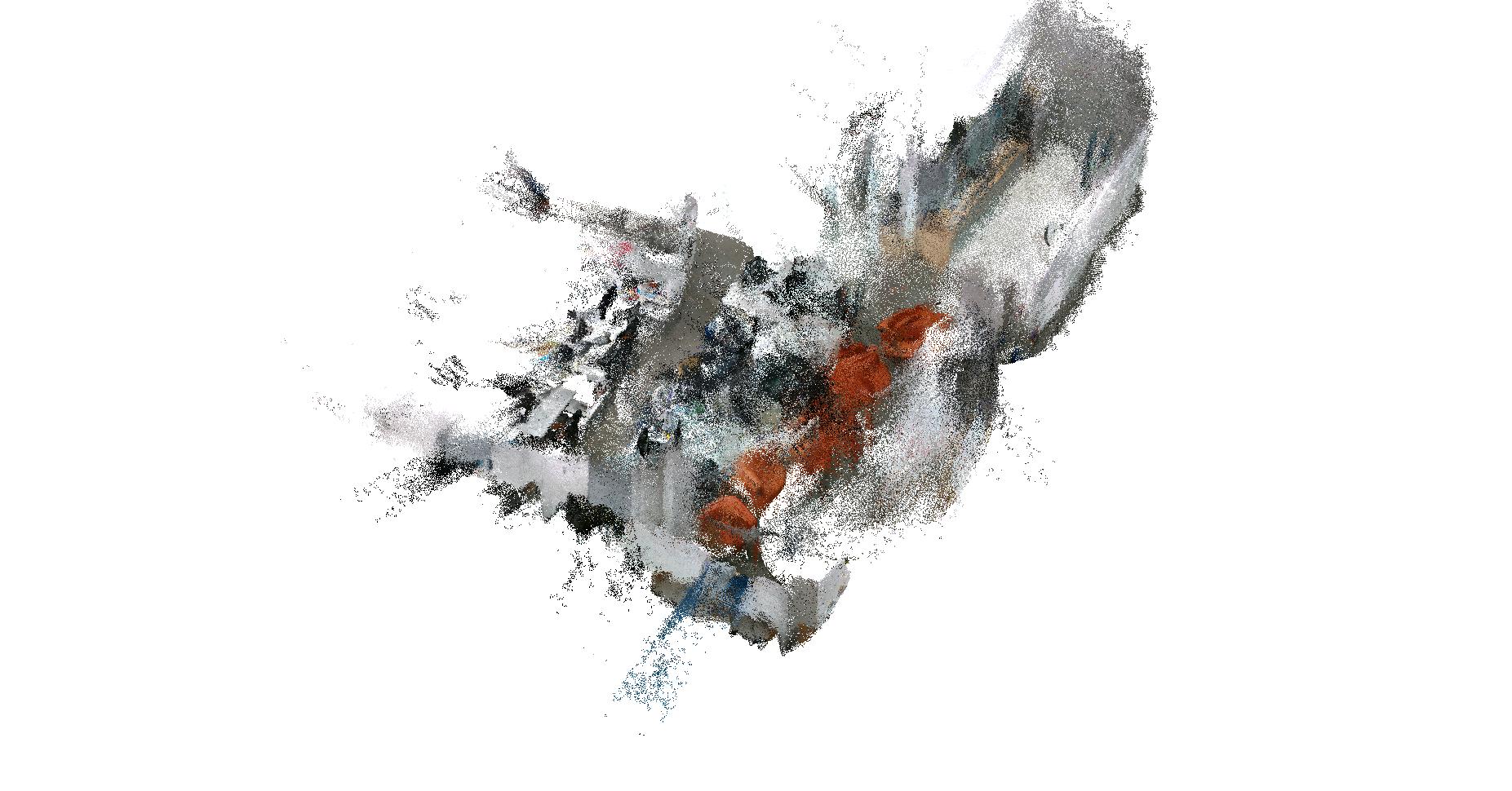}\label{SFig-SLAMe}
     }
     \\
     \subfloat[]{%
      \includegraphics[trim={0cm 0cm 0cm 0cm},clip,scale=0.14]{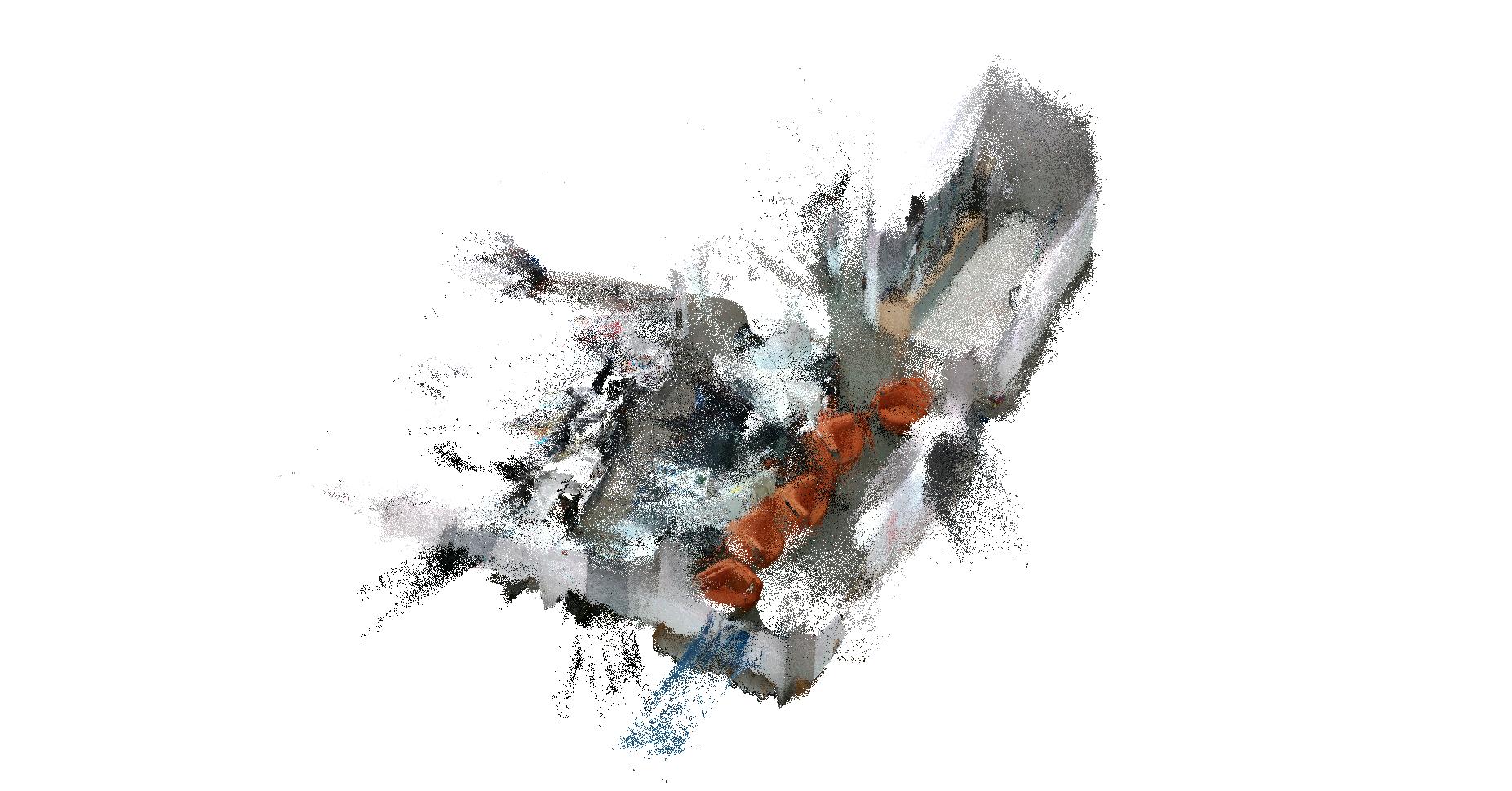}\label{SFig-SLAMf}
     }
     \caption{ In \protect\subref{SFig-SLAMa}-\protect\subref{SFig-SLAMd}, four pointclouds from four different sequence of images are obtained. Relative pose measurements between a subset of images are obtained through feature matching and the use of an object detector for place recognition. After removing the outliers and joining the maps, an optimization problem is solved to reduce the overall error. The final pointcloud made from joining all the four pointclouds without outlier detection is shown in \protect\subref{SFig-SLAMe} and with outlier detection is shown in \protect\subref{SFig-SLAMf}.}
          \label{Fig-SLAMpcl}
\end{figure*}


\section{Conclusion}
In this paper, we presented a probabilistic outlier detection algorithm which detects outliers based on the geometric consistency of rotation measurements over the cycles of a pose graph. We introduced a novel discreet inference algorithm with convergence guarantees that performed better than Belief Propagation. Every step in our algorithm except finding the minimum cycle basis can be implemented in a distributed fashion (i.e. \cite{kamran2019deco}). For our future work, we plan to make our algorithm fully distributed by relaxing the minimum criteria and use a cycle basis which is obtained with a distributed approach.  





\bibliographystyle{ieee}
\bibliography{bibroot}

\end{document}